\newcommand{\Ncal}{\mathcal{N}}
\newcommand{\reals}{\mathbb{R}}
\newcommand{\norm}[1]{\left\|#1\right\|}
\title[Short Title]{On the Reconstruction of Training Data from Group Invariant Networks}
\author{\Name{Ran Elbaz\textsuperscript{1}, Gilad Yehudai\textsuperscript{2},  Meirav Galun\textsuperscript{3} Haggai Maron\textsuperscript{1,4} \\}
\addr \textsuperscript{1} Technion – Israel Institute of Technology, Electrical and Computer Engineering \\
\textsuperscript{2} Center for Data Science, New York University \\
\textsuperscript{3} Weizmann Institute of Science \\
\textsuperscript{4} NVIDIA Research
}
\begin{document}

\maketitle

\begin{abstract}
    Reconstructing training data from trained neural networks is an active area of research with significant implications for privacy and explainability. Recent advances have demonstrated the feasibility of this process for several data types. However, reconstructing data from group-invariant neural networks poses distinct challenges that remain largely unexplored. This paper addresses this gap by first formulating the problem and discussing some of its basic properties. We then provide an experimental evaluation demonstrating that conventional reconstruction techniques are inadequate in this scenario. Specifically, we observe that the resulting data reconstructions gravitate toward symmetric inputs on which the group acts trivially, leading to poor-quality results. Finally, we propose two novel methods aiming to improve reconstruction in this setup and present promising preliminary experimental results. Our work sheds light on the complexities of reconstructing data from group invariant neural networks and offers potential avenues for future research in this domain. 
\end{abstract}
\begin{keywords}
Group invariant neural networks, Dataset reconstruction, Privacy attacks
\end{keywords}

\section{Introduction}
\label{sec:intro}
\begin{wrapfigure}[13]{R}{0.35\textwidth}
  \begin{center}
  \label{fig: orbitopes}
  \caption{Visualization of orbitope with $G=D_4$}
    \includegraphics[width=0.3\textwidth]{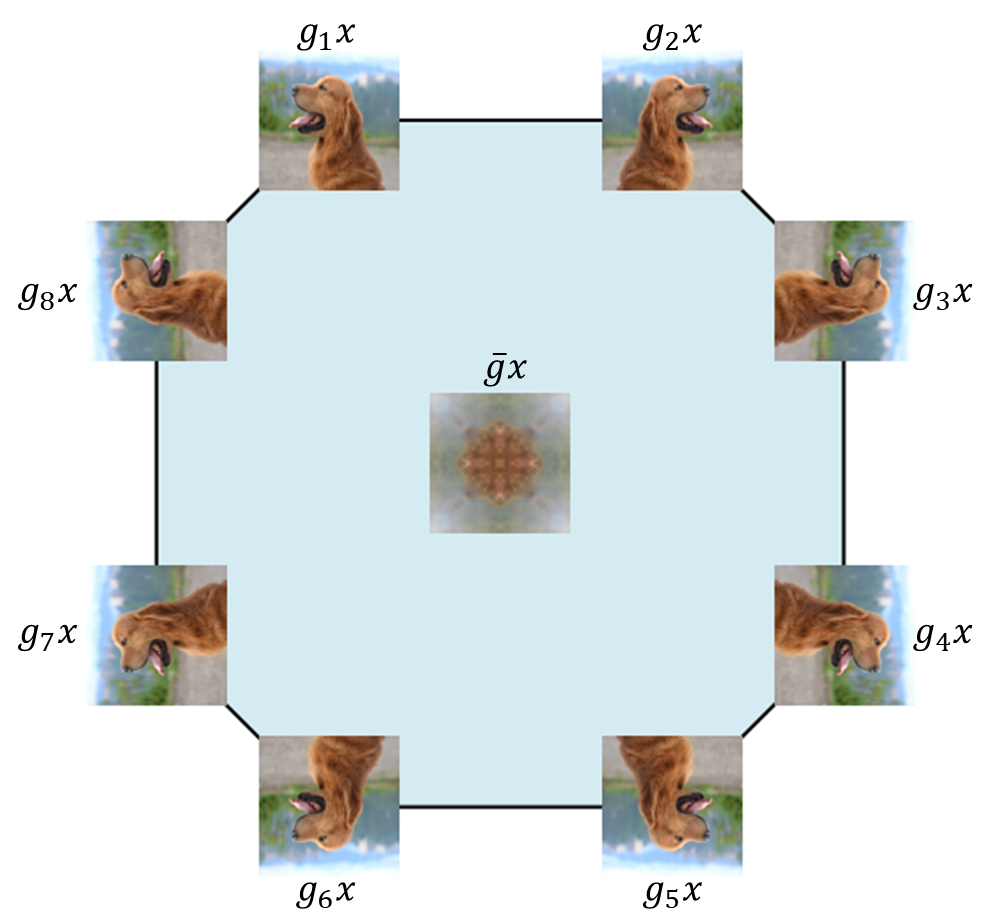}
\end{center} 
\end{wrapfigure}
Recent works \citep{haim2022reconstructing,oz2024reconstructing,loo2023dataset} have shown that it is possible to reconstruct training data from standard neural networks. However, the reconstruction from group invariant neural networks, such as networks applied to point clouds \citep{zaheer2017deep,qi2017pointnet}, graph data \citep{gilmer2017neural} or images with rotation and reflection symmetries \citep{cohen2016group}, remains largely unexplored. 

Unlike the standard case explored in previous works, reconstructing data from group invariant models faces the unique challenge of multiple distinct inputs representing the same data point (namely, the orbit of a data point). 
This paper addresses the task of reconstructing data from group invariant networks, focusing on the limitations of conventional methods and proposing novel solutions.
Our key contributions include:
\begin{enumerate}
    \item A formal definition of the reconstruction problem for invariant networks, accompanied by a discussion of its fundamental properties and invariance characteristics.
    \item Empirical evidence demonstrating that conventional reconstruction methods converge to symmetric inputs (i.e., inputs on which the group acts trivially), producing low-quality reconstructions. The inset illustrates this phenomenon for $G=D_4$: rather than recovering an element from a training example's orbit, conventional methods often converge to the (symmetric) orbit average.
    \item Introduction of two novel techniques that enhance standard methods, enabling them to go beyond symmetric reconstructions with encouraging initial results.

    \item A discussion of potential future research directions.
\end{enumerate}

\section{Preliminaries} \label{sec: Preliminaries}
\textbf{Invariance and equivariance.} Let $(V, \rho)$,$(V', \rho')$ be  group representations of a finite group $G$. We denote $orb(\Vec{x})$ as the orbit of a vector $\Vec{x} \in V$ under the group action and  $Stab(\Vec{x})$ as its stabilizer group. The orbitope of a point $\Vec{x}$ (\cite{Sanyal_2011}) is defined as the convex hull of $orb(\Vec{x})$ as illustrated in \figureref{fig: orbitopes}. We denote  the vectors on which the group acts trivially as $V^G$, formally $V^G = \{\Vec{v} \in V | \rho(g)\Vec{v}  = \Vec{v},  \forall g \in G \}$.
The projection of a vector $\Vec{x}$ on $V^G$, denoted as $\bar{g} x$, is the average of its orbit, $\bar{g} \cdot \Vec{x} =  \frac{1}{|G|}\sum_{g\in G} \rho(g) \cdot \Vec{x}$.  
A function $f:V \rightarrow V'$ is $G$-invariant if $f\circ \rho(g) = f$ for any $g\in G$ and a function $F: V\rightarrow V'$ is $G$-equivariant if $F\circ \rho(g) = \rho'(g) \circ F$  for any $g\in G$. For simplicity, we denote $\rho(g) \Vec{x} = g \Vec{x}$.

\noindent \textbf{Data Reconstruction.} \label{sec:Data Reconstruction} There are several methods to reconstruct training data from trained neural networks $\phi(\Vec{x};\theta)$, where $\vec{x}\in \reals^d$ is the network's input, and $\theta$ is a vectorization of its parameters. Here, we focus on two methods: (1) Activation Maximization (AM) \citep{fredrikson2015model,yang2019neural}, where the goal is to look for the input, which maximizes the model output for the desired target class. Namely, the objective for class $i$ is defined as $\mathcal{L}_{rec} = \max_{\Vec{x}\in\reals^d} (\phi(\Vec{x};\theta))_i$. (2) KKT-based reconstruction \citep{haim2022reconstructing,buzaglo2023deconstructing}. This method uses the implicit bias of homogeneous neural networks trained with gradient methods toward margin maximization. Here, the following objective is optimized:
$    \mathcal{L}_{rec}(\Vec{x}_1,\dots,\Vec{x}_m,\lambda_1,\dots,\lambda_m) = \norm{\theta - \sum_{i=1}^m\lambda_i y_i \phi(\Vec{x}_i; \theta)}$, (The $y_i's$ denote the labels.). For a detailed description of the reconstruction methods, see Appendix \ref{appendix:recon methods}.

vspace{-10pt}
\section{Reconstruction from Invariant networks}
\subsection{Problem definition} \label{sec: problem defintion}

Let $\set{D} = \{(\Vec{x}_i,y_i)\}_{i=1}^n\in\reals^d \times\{\pm 1\}$ be a training dataset and let $(\reals^d, \rho)$ is an orthogonal representation of a finite group $G$. Let $\phi: \reals^d \rightarrow \reals $ be a pre-trained $G$-invariant neural network with weights $\theta \in \reals^p$.
We aim to find a set of reconstructions $\set{S} = \{\hat{\Vec{x}}_1, \dots, \hat{\Vec{x}}_m \} \subset \reals^d$ such that $\set{S}$ is the closest set to $\set{D}$ with respect to some evaluation metric.

\noindent \textbf{Evaluation under group symmetries.} Since the reconstruction of any element from the orbit is equally valid, the evaluation metric for invariant data reconstruction problems should be invariant to these group symmetries to ensure a fair assessment of model performance. This can be done, for example, by defining the distance between a reconstruction $x$ and a training example $x'$ using the following metric $d(x,x')=\min_{g\in G} \|x-gx'\| $. Note that invariant metrics may be computationally challenging, e.g. when the input is a graph.

\noindent\textbf{Applying AM and KKT to the invariant case.} In most cases the training of (homogeneous) invariant neural networks is conducted in a way that the conditions of both methods (AM and KKT-based) are met, so we can apply them to the invariant case. 
\vspace{-5pt}
\subsection{Challenges and theoretical observations} 
\label{subsec: theoretical properties}
Here we present basic theoretical findings and challenges in reconstructing data from invariant models, applicable to any orthogonal representation of a finite group. Full proofs are in the appendix.

\noindent \textbf{(1) Multiple equivalent solutions.} When reconstructing data from invariant models, a critical factor to consider is that each training sample can have multiple equivalent representations. These representations form what is known as an orbit under the group action.  
\begin{proposition} \label{prop: invart loss}
    If the model is $G$-invariant then the objective functions of the methods mentioned in \sectionref{sec: Preliminaries} are $G$-invariant. Formally,
    \begin{equation}
        \mathcal{L}_{rec}(\Vec{x}_1,\dots,\Vec{x}_m) = \mathcal{L}_{rec}(g_1 \cdot \Vec{x}_1 ,\dots, g_m x_m),  ~ \forall (x_1 , g_1) , \dots , (x_m, g_m) \in \reals^d \times G
    \end{equation} 

\end{proposition}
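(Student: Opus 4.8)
The plan is to treat the two objectives separately, reducing both to a single elementary fact: $G$-invariance of $\phi$ forces every per-sample quantity appearing in the objective to be constant along orbits. Throughout I write the invariance hypothesis as $\phi(g\vec{x};\theta) = \phi(\vec{x};\theta)$ for all $g\in G$ and all $\vec{x}\in\reals^d$, which is exactly the condition $\phi\circ\rho(g)=\phi$ from the preliminaries. For the Activation Maximization objective the claim is then immediate: the objective for class $i$ is the output $(\phi(\vec{x};\theta))_i$, so replacing $\vec{x}$ by $g\vec{x}$ leaves it unchanged, giving $\mathcal{L}_{rec}(g\vec{x})=\mathcal{L}_{rec}(\vec{x})$.

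For the KKT-based objective I would first establish that the relevant per-sample map is itself $G$-invariant. The summand depends on $\vec{x}_i$ only through $\phi(\vec{x}_i;\theta)$ (resp. $\nabla_\theta\phi(\vec{x}_i;\theta)$ in the gradient formulation underlying the KKT stationarity condition). For the gradient version I would differentiate the identity $\phi(g\vec{x};\theta)=\phi(\vec{x};\theta)$ with respect to $\theta$; since $\rho(g)$ acts on the input only and does not depend on $\theta$, the chain rule yields $(\nabla_\theta\phi)(g\vec{x};\theta)=(\nabla_\theta\phi)(\vec{x};\theta)$. Thus each summand $\lambda_i y_i\phi(g_i\vec{x}_i;\theta)$ equals $\lambda_i y_i\phi(\vec{x}_i;\theta)$ term by term, with the scalars $\lambda_i,y_i$ untouched. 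Summing and taking the norm then gives
\begin{equation}
\norm{\theta-\sum_{i=1}^m\lambda_i y_i\phi(g_i\vec{x}_i;\theta)}=\norm{\theta-\sum_{i=1}^m\lambda_i y_i\phi(\vec{x}_i;\theta)},
\end{equation}
which is the asserted invariance.

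The key structural point, and the only thing that needs care, is that the group elements $g_1,\dots,g_m$ may be chosen independently across samples. This causes no difficulty precisely because each objective is a sum of per-sample terms, and each term depends on a single input; invariance therefore applies argument-by-argument, so the independent $g_i$'s can be removed one at a time. I expect the genuine obstacle here to be one of care rather than depth: namely verifying that $G$-invariance of $\phi$ transfers to its parameter gradient $\nabla_\theta\phi$, since it is the gradients, not $\phi$ itself, that enter the KKT condition. Once that small lemma is in place, both objectives collapse to the term-by-term cancellation displayed above.
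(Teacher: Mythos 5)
Your proposal is correct and follows essentially the same route as the paper: the AM objective is trivially invariant since it uses the model output directly, and the KKT objective is handled by the key lemma that $G$-invariance of $\phi$ in $\Vec{x}$ transfers to its parameter gradient $\nabla_\theta\phi$, which is exactly the paper's Lemma~\ref{lemma:invariant gradient} (the paper proves it via difference quotients of the identity $\phi(g\Vec{x};\theta+\epsilon e_i)=\phi(\Vec{x};\theta+\epsilon e_i)$, which is just a more explicit version of your ``differentiate the invariance identity in $\theta$'' step). Your additional remark that the independent choice of $g_1,\dots,g_m$ is harmless because invariance applies argument-by-argument is a point the paper leaves implicit, but it does not change the substance of the argument.
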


\noindent \textbf{(2) Optimizing invariant reconstruction objectives using GD.}
As invariant reconstruction objectives have multiple optima, both initialization and optimization methods are crucial in determining the final solution. The following proposition sheds light on GD's behavior in this context:
\vspace{-5pt}
\begin{proposition} \label{lemma:trivial closere} If the reconstruction objective function $\mathcal{L}_{rec}(\Vec{x} ;\theta)$ is $G$-invariant function then: (i) the GD step  $x_t = x_{t-1} -\eta_t \nabla_x \mathcal{L}_{rec}(x_{t-1}) $ is  G-equivariant function of $x_{t-1}$;  and (ii) $\text{Stab}_G(x_{t-1}) \subseteq \text{Stab}_G(x_{t})$
\end{proposition}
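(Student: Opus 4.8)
The plan is to establish the result in two moves: first, show that the gradient of a $G$-invariant function is $G$-equivariant (which makes the gradient-descent map equivariant), and then deduce the stabilizer inclusion as an immediate corollary.

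For part (i), I would start from the defining invariance identity $\mathcal{L}_{rec}(\rho(g) x) = \mathcal{L}_{rec}(x)$, which holds for every $g \in G$, and differentiate both sides with respect to $x$. By the chain rule, the left-hand side yields $\rho(g)^\top \nabla \mathcal{L}_{rec}(\rho(g) x)$, while the right-hand side is simply $\nabla \mathcal{L}_{rec}(x)$. Multiplying through by $\rho(g)$ and using orthogonality of the representation, $\rho(g)\rho(g)^\top = I$, gives the equivariance relation $\nabla \mathcal{L}_{rec}(g x) = g\, \nabla \mathcal{L}_{rec}(x)$. Writing the GD step as the map $\Phi(x) = x - \eta_t \nabla \mathcal{L}_{rec}(x)$, equivariance of $\Phi$ then follows, since the identity map is trivially equivariant and $\Phi$ is an affine combination of the identity with the (now equivariant) gradient:
\begin{equation}
    \Phi(g x) = g x - \eta_t\, g\, \nabla \mathcal{L}_{rec}(x) = g\,\Phi(x).
\end{equation}

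For part (ii), I would take any $g \in \text{Stab}_G(x_{t-1})$, so that $g\, x_{t-1} = x_{t-1}$. Applying the equivariance established in part (i) gives $x_t = \Phi(x_{t-1}) = \Phi(g\, x_{t-1}) = g\,\Phi(x_{t-1}) = g\, x_t$, hence $g \in \text{Stab}_G(x_t)$, which is exactly the claimed inclusion. Conceptually, this is the punchline behind the empirical drift toward symmetric inputs: iterating the argument across steps shows that stabilizers can only grow along the trajectory and never shrink, so any symmetry once acquired is retained for all subsequent iterations.

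The only delicate point is the gradient computation, which relies essentially on orthogonality of $\rho$ to convert $\rho(g)^\top$ into $\rho(g)^{-1}$; this is guaranteed by the standing assumption that $(\reals^d, \rho)$ is an orthogonal representation. Everything else reduces to a routine application of the chain rule and the definition of the stabilizer, so I do not anticipate any substantive obstacle.
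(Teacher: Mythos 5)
Your proof is correct and follows essentially the same route as the paper's: both hinge on the key lemma that the gradient of a $G$-invariant function is $G$-equivariant (using orthogonality of $\rho$), from which the equivariance of the GD map and the stabilizer inclusion follow exactly as you derive them. The only difference is cosmetic --- the paper establishes the gradient equivariance via directional derivatives from first principles (which also makes explicit that differentiability at $x_0$ transfers to $g \cdot x_0$), whereas you obtain it more directly by the chain rule.
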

First, the above part (i) implies that GD is an equivariant function of the initialization, as it is a composition of equivariant functions (GD iterations). Therefore, the initialization determines which element the method converges to.  Moreover, it implies that if we use invariant distribution for the initialization ($P(x_0)$ is a $G$-invariant function) the algorithm induces an invariant distribution over the reconstructions. 
Moreover, the nesting of the stabilizers mentioned in part (ii) of Proposition \ref{lemma:trivial closere} indicates that as optimization progresses, the stabilizers of the iterates may become more restrictive, thereby narrowing the exploration of the solution space. As we will see in the following sections, we believe that this property plays a significant role in the dynamics of optimization and can influence the final outcomes.

\vspace{-5pt}

\subsection{Ineffectiveness of standard methods in Invariant Reconstruction }
This subsection presents experimental evidence demonstrating the ineffectiveness of AM and KKT-based methods in solving the invariant reconstruction problem. The full experimental results and description are on the appendix.

\noindent \textbf{Setup and evaluation.} We focus on the reconstruction of image data from invariant models of different groups of reflections and rotations  (see \sectionref{appen: Experimental setting}).  To evaluate the results we used DSSIM proposed in \cite{baker2023dssimstructuralsimilarityindex} for measuring differences between images (high DSSIM implies high structural dissimilarity).To ensure the invariance of our metric, we match reconstructions to training samples across all group transformations.

\noindent \textbf{Results.}  As illustrated in Figures \ref{fig:am results} and \ref{fig:kkt results}, the optimization often converges to invariant reconstructions, resulting in a significant loss of information and low-quality reconstructions. Moreover, our observations reveal that many reconstructions lie on the convex hull of sample orbits, or orbitopes. To understand the distribution of reconstructions on the orbitopes we sampled various points on the ground truth orbitopes and identified the nearest neighbors for each reconstruction.  as depicted in Figure \ref{fig:dssim kkt} the observed distribution aligns with our predictions in Section \ref{subsec: theoretical properties}: the reconstructions are concentrated around the group average that lies in $V^G$ with the largest possible stabilizer.
Notably, the KKT-based method outperforms activation maximization, as shown in Figure \ref{fig: dssim}. Furthermore, we observe that increasing either the group size or the training set size leads to poorer results. %These findings corroborate the results reported by \citet{haim2022reconstructing} and \citet{buzaglo2023deconstructing}. \haggai{only the finding about data setsize no?}

\vspace{-10pt}
\section{Symmetry-aware reconstruction}
We propose two methods to improve reconstruction: mitigating GD's bias towards symmetry and imposing meaningful input space priors.

\noindent
\textbf{Symmetry-Aware Memory-Enhanced Gradient Descent (SAME-GD).}
Empirically, we tend to converge to points with nontrivial stabilizers, in particular to points on or close to $V^G$. We suggest aggregating the current query point with previous points in the optimization trajectory in a way that breaks the nesting property proved in Proposition \ref{lemma:trivial closere}. For simplicity, we used convex aggregation in the form of $\Vec{x}_t \leftarrow \alpha_t \Vec{x}_t + (1-\alpha_t) (\Vec{x}_{prev} - \bar{g} \Vec{x}_{prev})$, see \algorithmref{alg:sam}.

\noindent
\textbf{Incorporating Deep Image Prior (DIP).}
As proposed in \cite{Ulyanov_2020}, convolutional neural networks can be used as an implicit prior when it comes to inverse problems. We propose to use the same objective functions of the existing methods, but parameterizing the reconstruction variables $\vec{x}_1,\dots, \Vec{x}_m$ as the output of a randomly initialized CNN instead of optimizing them directly. The motivation is that the natural image prior could potentially break the symmetry and enhance the quality of the reconstructions.

\noindent
\textbf{Preliminary experimental results.}
We investigated the reconstruction abilities of the proposed methods under different configurations as listed on \tableref{table: dssim}. We extended our experimental results to include CIFAR-10 images, with binary labels indicating animals and vehicles. %The reconstructions vary significantly depending on the applied method, highlighting that the choice of reconstruction technique is crucial for achieving high quality results.
%The Activation Maximization approach often produces blurred reconstructions that lay in $V^G$. On the other hand, 
SAME-GD and DIP, when combined with the KKT objective, yield notably improved reconstructions. These methods show a reduced tendency to converge to group averages, thus preserving more meaningful data characteristics. Particularly noteworthy is the KKT with DIP approach (Figure \ref{fig: results}), which excels in producing piece-wise smooth asymmetric reconstructions by exploiting the implicit prior induced by DIP.

\begin{figure}[htbp]
\floatconts
  {fig: results}
  {\caption{Pairs of training samples and their corresponding nearest neighbors reconstructions on their left, where $n=50,|G|=2$ is the group of right-left reflections.}}
  {%
    \begin{tabular}{p{0.25\linewidth}p{0.75\linewidth}}
      AM & \includegraphics[width=0.95\linewidth]{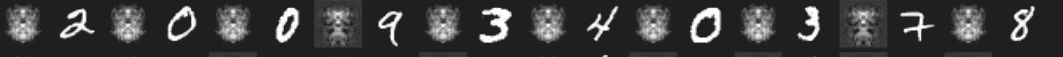} \\
      KKT & \includegraphics[width=0.95\linewidth]{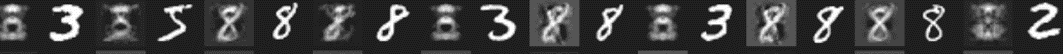} \\
      KKT + SAME-GD & \includegraphics[width=0.95\linewidth]{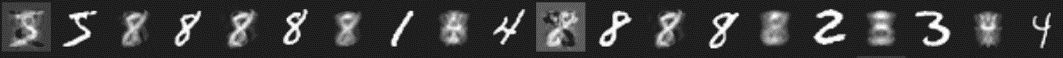} \\
      KKT + DIP & \includegraphics[width=0.95\linewidth]{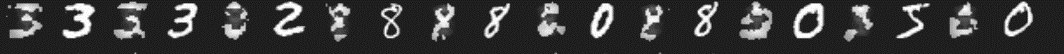}
    \end{tabular}
  }
\end{figure}

% \vspace{-10pt}
\begin{table}[h!] \label{table: dssim}
\tiny
\centering
\begin{tabular}{lccccccc}
\toprule
Dataset & Group size & Train set size & AM & KKT & KKT + SAME-GD & KKT + DIP \\
\midrule
MNIST & 2 & 50  & $0.482 \pm 0.000 $ & $0.464 \pm 0.000$ & $0.429 \pm 0.004$ & $ \mathbf{0.285 \pm 0.018}$ \\
MNIST & 2 & 100 & $0.484 \pm  0.000 $ & $0.467 \pm 0.000$ & $0.450 \pm 0.004$ & $\mathbf{0.271 \pm 0.000}$ \\
CIFAR-10 & 2 & 50  & $0.446 \pm  0.000 $ & $\mathbf{0.346 \pm 0.001}$ & $0.369 \pm 0.000$ & -- \\
CIFAR-10 & 2 & 100 & $0.463 \pm 0.000 $ & $0.371 \pm 0.001$ & $\mathbf{0.370 \pm 0.007}$ & -- \\
MNIST & 8 & 50  & $0.490 \pm 0.002$ & $0.471 \pm 0.000$ & $0.465 \pm 0.000$ & $\mathbf{0.314 \pm 0.021} $ \\
MNIST & 8 & 100 & $0.494 \pm 0$ & $0.471 \pm 0.001$ & $0.469 \pm 0.000$ & $ \mathbf{0.323 \pm 0.025}$\\
\bottomrule
\end{tabular}
\caption{The Mean DSSIM value for different methods across datasets, group sizes, and training set sizes.}
\end{table}
\vspace{-20pt}
% \\
\noindent
\textbf{Discussion.}
Our work highlights the challenges in reconstructing training data from group-invariant neural networks. The theoretical and experimental foundations laid here raise questions about the behavior of reconstruction methods applied to these networks. 
Although we provide some insights and novel approaches, It is still unclear why standard reconstruction methods fail for invariant models. 
There are many future directions to be explored, some of them are discussed in more details in Appendix \ref{appendix:future directions}.

\acks{HM is the Robert J. Shillman Fellow, and is supported by the Israel Science Foundation through a personal grant (ISF 264/23) and an equipment grant (ISF 532/23). We would like to thank Yam Eitan for his valuable contribution to this project.}

\bibliography{pmlr-sample}

\appendix

\section{Previous work}
 Several methods have been developed to reconstruct training samples from neural networks under different settings. Activation-maximization attacks \cite{fredrikson2015model,yang2019neural} optimize the target output class over the input. Another method is reconstruction in a federated learning setup \cite{zhu2019deep,hitaj2017deep,geiping2020inverting,huang2021evaluating} where the attacker is assumed to have knowledge of the sample's gradient. Several works use the implicit bias of neural networks towards margin maximization \cite{lyu2019gradient,ji2020directional} to devise reconstruction losses, and thus reconstruct samples that are on the margin \cite{haim2022reconstructing,buzaglo2023deconstructing,loo2023dataset,oz2024reconstructing}.
Some prior studies have explored reconstructing graph structures from trained networks. The majority of existing research has concentrated on single-graph learning scenarios. In these cases, known node feature matrices effectively break symmetries, which simplifies the problem \cite{zhang2021graphmi, wu2021adapting}.

\section{Current reconstruction methods}\label{appendix:recon methods}
This is elaboration of \sectionref{sec:Data Reconstruction} in the main text. There are several methods to reconstruct training data from trained neural networks. In this work we focused on two methods which allow data reconstruction in a general setting with minimal assumptions on the model's architecture.

\noindent
\textbf{Activation-Maximization (AM).}
We are given a trained multi-class classifier $\phi:\reals^d\rightarrow\reals^C$ with $C$ classes. The predicted class of the classifier is defined as $\max_{i\in[C]}(\phi(\Vec{x}))_i$, namely, the class with the maximal output. In this reconstruction method, to reconstruct a sample in class $i$, we randomly initialize an input $\Vec{x}\sim\Ncal\left(0,\frac{1}{d}I\right)$ and maximize the loss objective  $\mathcal{L}_{rec} = \max_{\Vec{x}\in\reals^d} (\phi(\Vec{x}))_i$. This is done by applying a first-order optimization method such as (Gradient Descent) GD.

\noindent
\textbf{KKT-based reconstruction.}
\cite{lyu2019gradient,ji2020directional} show that given a homogeneous\footnote{A function $f$ is $L$ homogeneous if for every $\alpha > 0$ we have $f(\alpha \Vec{x}) = \alpha^Lf(\Vec{x})$} neural network $\phi(\cdot,\theta)$, trained with gradient flow using an exponentially tailed loss (e.g., binary cross entropy) on a binary classification dataset $\{(\Vec{x}_i,y_i)\}_{i=1}^n\in\reals^d \times\{\pm 1\}$, its parameters $\theta$ converge to a KKT point of the following margin maximization problem
\begin{equation}
    \min \norm{\theta}^2~~~ \text{s.t.}~~ \forall i=1,\dots,n,~~y_i\phi(\Vec{x}_i, \theta) \geq 1.
\end{equation}
In particular, the KKT stationary condition is satisfied, namely there exist $\lambda_i \geq 0$ for $i=1,\dots,n$ such that $\theta = \sum_{i=1}^n \lambda_i y_i \phi(\Vec{x}_i,\theta)$. In 
\cite{haim2022reconstructing} 
the authors use the stationary condition to construct an objective that reconstructs the training data $\Vec{x}_i$ given the trained weights $\theta$. Namely, they propose optimizing the following loss objective
\begin{equation}
    \mathcal{L}_{rec}(\Vec{x}_1,\dots,\Vec{x}_m,\lambda_1,\dots,\lambda_m) = \norm{\theta - \sum_{i=1}^m\lambda_i y_i \phi(\Vec{x}_i, \theta)} 
\end{equation}
where $m$, the number of reconstruction candidates is chosen to be $m \gg n$. This optimization problem in practice is solved by GD or similar optimization methods.

\section{Proof of proposition \ref{prop: invart loss}} \label{appendix: proof for prop1}
The activation maximization objective function uses the model output directly. Since the model is invariant it is trivially implying that the objective loss is also invariant.
As the KKT based method involves first-order derivatives we would start by proving the following lemma:
\begin{lemma} \label{lemma:invariant gradient} Let $f(x ; \theta):\reals^d \times \reals^p \rightarrow \reals $ be $G$-invariant function w.r.t $x$. Assume $f$ has a partial gradient by $\theta$ at $(x_0 , \theta_0)$ . Then $f(x;\theta)$ also has partial gradient by $\theta$  at $\left\{(g\cdot x_0 , \theta_0)\right\}_{g \in G}$ . Moreover, $\nabla_\theta f(x_0;\theta) = \nabla_\theta f(g \cdot x_0;\theta) , \forall g \in G$
 \end{lemma}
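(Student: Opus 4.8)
The plan is to exploit the fact that $G$-invariance in $x$ makes the two functions of $\theta$ obtained from an orbit element and its base point literally identical, after which the existence of the partial gradient and the equality of gradients both transfer for free.

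First I would fix an arbitrary $g \in G$ and introduce the two single-variable (in $\theta$) functions $h(\theta) := f(x_0; \theta)$ and $h_g(\theta) := f(g \cdot x_0; \theta)$, both defined on all of $\reals^p$. The hypothesis that $f$ is $G$-invariant with respect to $x$ means precisely that $f(g \cdot x; \theta) = f(x; \theta)$ holds for \emph{every} fixed $\theta$, not merely at the single point $\theta_0$. Substituting $x = x_0$ gives $h_g(\theta) = h(\theta)$ for all $\theta \in \reals^p$; that is, $h_g$ and $h$ are one and the same map.

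Next I would invoke the elementary fact that two functions agreeing on a neighborhood of a point have identical differentiability behavior there. Since by assumption $h = f(x_0; \cdot)$ admits a gradient at $\theta_0$, and $h_g \equiv h$, the function $h_g = f(g \cdot x_0; \cdot)$ also admits a gradient at $\theta_0$, with $\nabla_\theta h_g(\theta_0) = \nabla_\theta h(\theta_0)$. Unwinding the notation yields $\nabla_\theta f(g \cdot x_0; \theta_0) = \nabla_\theta f(x_0; \theta_0)$, and since $g \in G$ was arbitrary, the statement holds for every element of the orbit $\{g \cdot x_0\}_{g \in G}$.

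I do not expect a genuine obstacle, as the argument is essentially a restatement of the hypothesis. The only point requiring care is to read "$G$-invariant w.r.t. $x$" as an identity of functions of $\theta$ (valid for all $\theta$), which is what legitimizes treating $f(g \cdot x_0; \cdot)$ and $f(x_0; \cdot)$ as the same function and hence sharing both the existence of the partial gradient and its value. With this lemma in hand, the $G$-invariance of the KKT objective $\mathcal{L}_{rec}$ follows immediately, since each summand $\lambda_i y_i \phi(\Vec{x}_i, \theta)$ has a $\theta$-gradient that is unchanged under replacing $\Vec{x}_i$ by $g_i \Vec{x}_i$.
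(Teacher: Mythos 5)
Your proof is correct and takes essentially the same approach as the paper: both rest on the observation that $G$-invariance in $x$ makes $f(g\cdot x_0;\cdot)$ and $f(x_0;\cdot)$ identical as functions of $\theta$, so existence and value of the gradient transfer. The paper merely spells this out via coordinate-wise difference quotients, whereas you state the identification of the two functions directly — a cleaner phrasing of the same argument.
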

 \begin{proof}
 Denote $\{e_1, e_2, ..., e_p\}$ to be the standard basis of $\reals^p$.
f is $G$-invariant, therefore $\forall i= 1,\dots,p,  \forall \epsilon \in \reals , \forall g\in G$ ,\begin{equation}
      \frac{ f(x;\theta + \epsilon e_i) - f(x;\theta) }{ \epsilon} = \frac{f(g \cdot x;\theta + \epsilon e_i) - f(x;\theta) }{ \epsilon} 
 \end{equation}
 Since it is given the limit of $\epsilon \rightarrow 0$ exists for the left side of the equation then the limit of the right side also exists and is equal to it.
 If we take the limit of both side, by definition we get:
 \begin{equation}
     \frac{\partial f(x;\theta)}{\partial \theta_i} = \frac{\partial f(g \cdot x;\theta)}{\partial \theta_i}
 \end{equation}
\end{proof}

In other words $\nabla_\theta f(\cdot, \theta)$ is $G$-invariant. As the trained model $\phi$ is invariant, we can say that $\nabla_\theta \phi $ is also invariant and therefore the objective loss in the KKT based method is also invariant.\\

\section{Proof of proposition \ref{lemma:trivial closere}} 
We prove here the extended version of Proposition \ref{lemma:trivial closere}.
\begin{proposition}
 Let $\mathcal{L}(\Vec{x} ;\theta):\reals^d \times \reals^p \rightarrow \reals $ be $G$-invariant function w.r.t $\Vec{x}$. Consider the following optimization problem
\begin{equation}
        min_{x\in \reals^d} \mathcal{L}(x, \theta)
    \end{equation}
solved by the following iterates of GD with some learning rates $\eta_t$  \begin{equation}
        x_t = x_{t-1} -\eta_t \nabla_x \mathcal{L}(x_{t-1}, \theta),~~t = 1,2,\dots, T
    \end{equation}
    Then
    \begin{enumerate}
        \item  The gradient step is G-equivariant function of $x_{t-1}$. 
        \item $Stab_G(x_{t-1}) \subseteq Stab_G(x_{t})$
    \end{enumerate}
\end{proposition}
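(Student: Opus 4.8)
The plan is to reduce both claims to a single lemma: that the gradient field $\nabla_x \mathcal{L}(\cdot, \theta)$ is itself $G$-equivariant. Once this is established, part (1) is immediate and part (2) is a one-line consequence of it.

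First I would establish equivariance of the gradient. Starting from the $G$-invariance hypothesis $\mathcal{L}(\rho(g) x, \theta) = \mathcal{L}(x, \theta)$ for every $g \in G$, I differentiate both sides with respect to $x$ via the chain rule. The left-hand side gives $\rho(g)^\top (\nabla_x \mathcal{L})(\rho(g) x)$, while the right-hand side gives $(\nabla_x \mathcal{L})(x)$, so
\begin{equation}
\rho(g)^\top \nabla_x \mathcal{L}(\rho(g) x) = \nabla_x \mathcal{L}(x).
\end{equation}
This is where the orthogonality of the representation enters, which is the one assumption doing real work: since $\rho(g)^\top = \rho(g)^{-1}$, multiplying on the left by $\rho(g)$ yields the equivariance identity $\nabla_x \mathcal{L}(\rho(g) x) = \rho(g)\nabla_x \mathcal{L}(x)$. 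Without orthogonality one would only obtain a transformation by the contragredient representation, and the subsequent cancellation would fail.

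For part (1), I write the gradient step as the map $\Phi_t(x) = x - \eta_t \nabla_x \mathcal{L}(x, \theta)$. Since the identity map is trivially equivariant and $\nabla_x \mathcal{L}$ is equivariant by the step above, their combination satisfies $\Phi_t(\rho(g)x) = \rho(g)x - \eta_t \rho(g)\nabla_x \mathcal{L}(x, \theta) = \rho(g)\Phi_t(x)$, proving that $x_t = \Phi_t(x_{t-1})$ is an equivariant function of $x_{t-1}$. For part (2), I take any $g \in \mathrm{Stab}_G(x_{t-1})$, so $\rho(g) x_{t-1} = x_{t-1}$, and apply this equivariance: $\rho(g) x_t = \rho(g)\Phi_t(x_{t-1}) = \Phi_t(\rho(g) x_{t-1}) = \Phi_t(x_{t-1}) = x_t$, hence $g \in \mathrm{Stab}_G(x_t)$.

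The main (and essentially only) obstacle is the careful bookkeeping of the transpose/inverse in the chain-rule step, together with recognizing that orthogonality is precisely what converts the naive pullback of the gradient into clean equivariance. The differentiability needed to apply the chain rule along the orbit is inherited from $\mathcal{L}$ by the same limit argument used in Lemma \ref{lemma:invariant gradient}. I would also remark, as the surrounding text does, that iterating part (1) shows the full GD trajectory is an equivariant function of the initialization, since it is a composition of the equivariant maps $\Phi_1, \dots, \Phi_T$.
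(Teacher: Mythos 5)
Your proposal is correct and follows essentially the same route as the paper: both reduce the proposition to a single key lemma, the $G$-equivariance of the gradient field $\nabla_x \mathcal{L}(\cdot,\theta)$, and then obtain parts (1) and (2) by the same one-line computations (equivariance of the step map, and applying it to a stabilizing $g$). The only difference is in how that lemma is proved --- the paper computes directional derivatives, showing $D_{g^{-1}h}\mathcal{L}(x) = D_h \mathcal{L}(g x)$ and using orthogonality to move $g^{-1}$ across the inner product, while you differentiate the invariance identity via the chain rule and use orthogonality to turn $\rho(g)^\top$ into $\rho(g)^{-1}$; the orthogonality assumption does exactly the same work in both arguments, and you correctly flag the transfer-of-differentiability point that the paper's lemma statement also handles.
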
\label{appendix: proof for lemma 1}
We first start by addressing the equivariance of the gradient by $x$.
\begin{lemma} \label{lemma: equivariant gradient}
Let $\mathcal{L}(x;\theta):\reals^d \times \reals^p \rightarrow \reals $ be $G$-invariant function w.r.t $x$ and assume $\rho(g)$ is an orthogonal matrix for all $g\in G$. If $\mathcal{L}$ is derivable by $x$ at $(x_0 , \theta_0)$  then $\mathcal{L}(x;\theta)$ is also derivable by $x$ at $\left\{(g\cdot x_0 , \theta_0)\right\}_{g \in G}$ .
\\ Moreover, $  \nabla_x \mathcal{L}(g \cdot x;\theta) = g \nabla_x \mathcal{L}(x;\theta) , \forall g \in G$.
\end{lemma}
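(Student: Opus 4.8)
The plan is to prove the equivariance of the gradient, $\nabla_x \mathcal{L}(g\cdot x;\theta) = g\,\nabla_x \mathcal{L}(x;\theta)$, by differentiating the invariance identity $\mathcal{L}(g\cdot x;\theta) = \mathcal{L}(x;\theta)$ through the chain rule. First I would fix $g\in G$ and define the composite map $x \mapsto \mathcal{L}(\rho(g)x;\theta)$. By $G$-invariance of $\mathcal{L}$ in its first argument, this composite equals $\mathcal{L}(x;\theta)$ identically as a function of $x$, so the two sides have identical gradients wherever they exist.

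Next I would apply the chain rule to the left-hand composite. Writing the inner map as the linear map $x \mapsto \rho(g)x$ whose Jacobian is the constant matrix $\rho(g)$, the chain rule gives
\begin{equation}
    \nabla_x \bigl[\mathcal{L}(\rho(g)x;\theta)\bigr] = \rho(g)^\top \,(\nabla_x \mathcal{L})(\rho(g)x;\theta),
\end{equation}
where $(\nabla_x\mathcal{L})(\rho(g)x;\theta)$ denotes the gradient of $\mathcal{L}$ evaluated at the point $\rho(g)x$. Equating this with $\nabla_x \mathcal{L}(x;\theta)$ from the invariance identity yields $\rho(g)^\top (\nabla_x\mathcal{L})(\rho(g)x;\theta) = \nabla_x\mathcal{L}(x;\theta)$. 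Now I would invoke the orthogonality hypothesis on $\rho(g)$: since $\rho(g)^\top = \rho(g)^{-1}$, multiplying both sides on the left by $\rho(g)$ gives $(\nabla_x\mathcal{L})(\rho(g)x;\theta) = \rho(g)\,\nabla_x\mathcal{L}(x;\theta)$, which is exactly the claimed equivariance $\nabla_x\mathcal{L}(g\cdot x;\theta) = g\,\nabla_x\mathcal{L}(x;\theta)$.

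For the existence claim (that differentiability at $(x_0,\theta_0)$ implies differentiability at each $(g\cdot x_0,\theta_0)$), I would argue that $x\mapsto \mathcal{L}(\rho(g)x;\theta)$ is differentiable at $x_0$ as the composition of the smooth linear map $\rho(g)$ with $\mathcal{L}$, and that this equals the differentiable function $x\mapsto\mathcal{L}(x;\theta)$; running the chain-rule relation in reverse and using that $\rho(g)$ is invertible shows the directional limits defining $\nabla_x\mathcal{L}$ at $g\cdot x_0$ exist. I expect the main subtlety to be bookkeeping the distinction between the gradient of the composite function and the gradient of $\mathcal{L}$ evaluated at the transformed point — this is the only place the orthogonality of $\rho(g)$ is actually used, and it is what converts the transpose $\rho(g)^\top$ appearing from the chain rule into the forward action $\rho(g)$ needed for equivariance. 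Once this lemma is established, part (1) of the proposition follows immediately since the GD step $x_{t-1}\mapsto x_{t-1}-\eta_t\nabla_x\mathcal{L}(x_{t-1})$ is a combination of the identity and the equivariant gradient, hence equivariant; and part (2) follows because if $g\in\mathrm{Stab}_G(x_{t-1})$ then applying the equivariant step to $g\cdot x_{t-1}=x_{t-1}$ shows $g$ fixes $x_t$ as well.
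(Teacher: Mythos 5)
Your core computation is correct and takes a genuinely different route from the paper. You differentiate the invariance identity $\mathcal{L}(\rho(g)x;\theta)=\mathcal{L}(x;\theta)$ with the multivariate chain rule, so the constant Jacobian $\rho(g)$ of the inner linear map produces $\rho(g)^\top(\nabla_x\mathcal{L})(\rho(g)x;\theta)=\nabla_x\mathcal{L}(x;\theta)$, and orthogonality converts $\rho(g)^\top$ into $\rho(g)^{-1}$. The paper instead argues from first principles with directional derivatives: it rewrites the difference quotient defining $D_{g^{-1}h}\mathcal{L}(x)$ as the one defining $D_h\mathcal{L}(gx)$ using invariance, expresses both as inner products, and uses orthogonality to move $g^{-1}$ across the inner product, $\langle\nabla\mathcal{L}(gx),h\rangle=\langle g^{-1}\nabla\mathcal{L}(gx),g^{-1}h\rangle$. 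Both arguments hinge on the same mechanism (orthogonality turning a transpose into an inverse), but yours is shorter and more standard, while the paper's difference-quotient version has the advantage that existence of the derivatives at $g\cdot x_0$ emerges from the same limits that give the formula.

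There is, however, one genuine (though easily repaired) flaw: your existence argument is circular as written. You assert that $x\mapsto\mathcal{L}(\rho(g)x;\theta)$ is differentiable at $x_0$ ``as the composition of the smooth linear map $\rho(g)$ with $\mathcal{L}$,'' but the chain rule gives differentiability of this composition at $x_0$ only if $\mathcal{L}$ is already differentiable at the image point $\rho(g)x_0=g\cdot x_0$ --- which is exactly what you are trying to prove. The repair is to run the composition in the other direction: by invariance, $\mathcal{L}(y;\theta)=\mathcal{L}(\rho(g)^{-1}y;\theta)$ for all $y$, and the right-hand side \emph{is} differentiable at $y=g\cdot x_0$, because the inner linear map sends this point to $x_0$, where $\mathcal{L}$ is differentiable by hypothesis. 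The chain rule then yields, at $y=g\cdot x_0$,
\begin{equation}
\nabla_x\mathcal{L}(g\cdot x_0;\theta)=\left(\rho(g)^{-1}\right)^\top\nabla_x\mathcal{L}(x_0;\theta)=\rho(g)\,\nabla_x\mathcal{L}(x_0;\theta),
\end{equation}
so this single composition delivers the existence claim and the equivariance formula simultaneously, making your route even shorter than what you sketched. Also avoid phrasing existence in terms of ``directional limits'': existence of all directional derivatives at $g\cdot x_0$ does not by itself imply differentiability there, whereas the composition argument above gives full differentiability directly.
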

\begin{proof}
For convenience, we would write $\mathcal{L}(\cdot)$ instead of $\mathcal{L}(\cdot , \theta_0)$ \\
By definition, for any direction $h \in \reals^d , ||h||_2 = 1$ ,\begin{equation}
     D_h f(x) =  <\nabla f(x) , h> 
 \end{equation}
 Where $D_h f(x)$ is the directional derivative of f at $x$. \\
  $f$ is $G$-invariant, therefore: 
 \begin{align}
        & D_{g^{-1}\cdot h} \mathcal{L}(x) = \lim_{\epsilon \rightarrow 0} \frac{\mathcal{L}(x+\epsilon g^{-1}\cdot h) - \mathcal{L}(x)}{\epsilon} \\
        & = \lim_{\epsilon \rightarrow 0} \frac{\mathcal{L}( g \cdot x+\epsilon g \cdot g^{-1} \cdot h) - \mathcal{L}(g \cdot x)}{\epsilon} \\
        & = lim_{\epsilon \rightarrow 0} \frac{\mathcal{L}( g\cdot x+\epsilon \cdot h) - \mathcal{L}(g \cdot x)}{\epsilon} \\
        & =  D_{h} \mathcal{L}(g \cdot x)
\end{align}
 On one hand,\begin{equation}
        D_{g^{-1}\cdot h} \mathcal{L}(x)   = <\nabla \mathcal{L}(x) , g^{-1} h> 
 \end{equation}
 On the other hand $g^{-1}$ is orthogonal, then
 \begin{align} 
     & D_{g^{-1}\cdot h} \mathcal{L}(x) = D_{h} \mathcal{L}(g \cdot x) \\
     & = <\nabla \mathcal{L}(g\cdot x) ,h> \\
     & = < g^{-1} \nabla \mathcal{L}(g\cdot x) ,\cdot g^{-1} \cdot h> \\
\end{align}
Therefore,\begin{equation}
   \nabla \mathcal{L}(x)  = g^{-1} \nabla \mathcal{L}(g\cdot x)
 \end{equation}
\end{proof}
 
In other words $\nabla_\theta \mathcal{L}(\cdot, \theta)$ is $G$-equivariant.\\
$\mathcal{L}$ is $G$-invariant, then by \lemmaref{lemma: equivariant gradient}, for every $g\in G$ and for any $x_{t-1} \in \reals^d$:\begin{equation}
        g x_{t-1} -\eta_t \nabla_x \mathcal{L}(g x_{t-1}, \theta)  = g x_{t-1} -g \eta_t \nabla_x \mathcal{L}(x_{t-1}), \theta) = g \cdot x_t
    \end{equation}
    Therefore the gradient step is $G$-equivariant.\\
if $g \in Stab(x_{t-1})$, then by definition $g x_{t-1} = x_{t-1}.$ Therefore:
\begin{align}
        & g\cdot x_t = g\cdot (x_{t-1} -\eta_t \nabla \mathcal{L}(x_{t-1}, \theta) )\\
        & = g\cdot x_{t-1} -\eta_t g\cdot \nabla \mathcal{L}(x_{t-1}, \theta) \\
        & = x_{t-1} - \eta_t \nabla \mathcal{L}(g \cdot x_{t-1}, \theta) \\
        & = x_{t-1} - \eta_t \nabla \mathcal{L}(x_{t-1}, \theta) \\
        & = x_{t}
\end{align} 
Therefore $g\in Stab(x_t)$

\section{Experimental setting} \label{appen: Experimental setting}

\textbf{Setting.}
We focus on image data and considered 4 groups for our experiments - the trivial group, group of 2 elements acting as horizontal reflection, the group of 4 elements acting as horizontal and vertical reflection $G_4$ (Klein four-group), and the Dihedral group $D_4$ (rotations and reflections). To construct the invariant model we used a ReLU neural network with two hidden layers of width 1000 each and applied symmetrization \footnote{ symmetrization is a common practice to project functions on the invariant function space using Reynolds operator $\phi(x;\theta) = \frac{1}{|G|} \sum_{g\in G} \tilde{\phi}(gx;\theta)$ }. Initially we trained the models on MNIST images with binary labels for odd or even digits. For each group we trained a neural network with different training set sizes $n= \{ 10, 20, 50, 100,200 \}$ for 100K epochs. All training ended with $\sim 1e-6$ training error and $100\%$ accuracy. For each method and configuration, we ran 5 experiments of reconstruction with different seeds and $m=1000$ candidates (500 per class).

\begin{figure}
\floatconts
  {fig: dssim}
  {\caption{The mean DSSIM over MNIST training subsets with varying size and different groups.}}
  {%
    \subfigure[KKT]{\label{fig:image-a}%
      \includegraphics[width=0.6\linewidth]{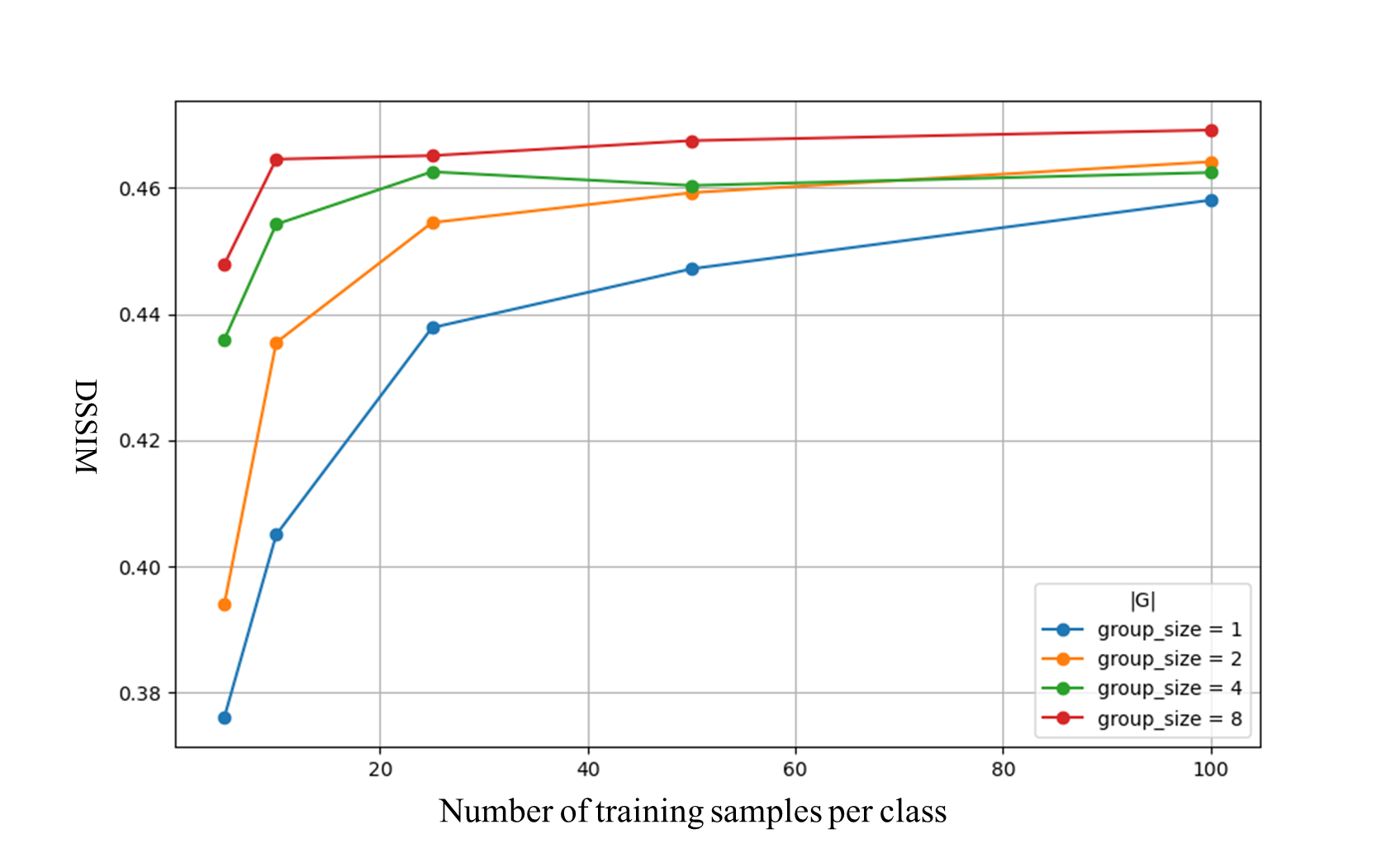}}%
    \subfigure[AM]{\label{fig:image-b}%
      \includegraphics[width=0.6\linewidth]{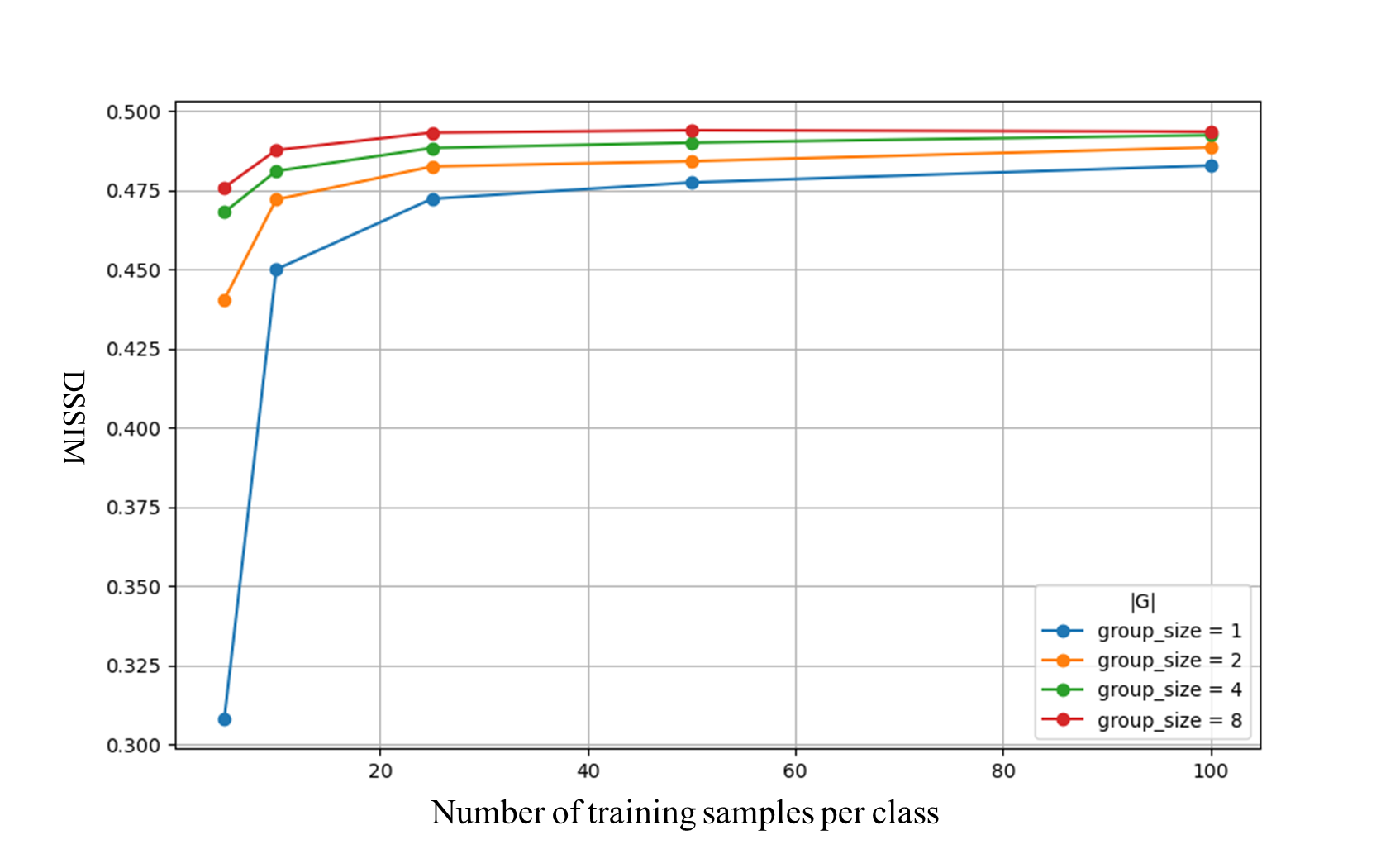}}
  }
\end{figure}

\begin{figure}[htbp]
\floatconts
  {fig: results full}
  {\caption{Pairs of training samples and their corresponding nearest neighbors reconstructions on their right , where $n=50 ,|G|=2$.}}
  {%
    \subfigure[Activation Maximization]{\label{fig:am results}%
      \includegraphics[width=0.75\linewidth]{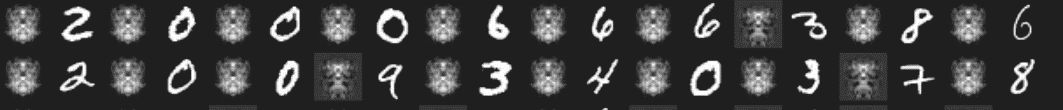}}
    \\
    \subfigure[Vanilla KKT]{\label{fig:kkt results}%
      \includegraphics[width=0.75\linewidth]{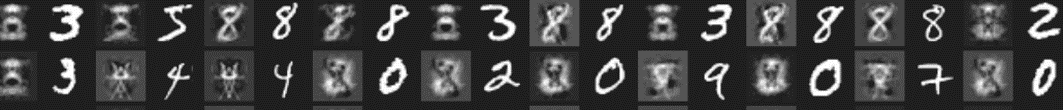}}%
    \\
    \subfigure[KKT with SAME-GD]{\label{ran:sam results}%
      \includegraphics[width=0.75\linewidth]{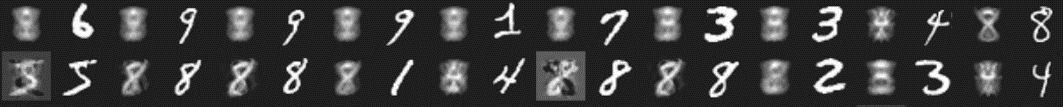}}
    \\
    \subfigure[KKT with Deep Image Prior]{\label{fig:dip results}%
      \includegraphics[width=0.75\linewidth]{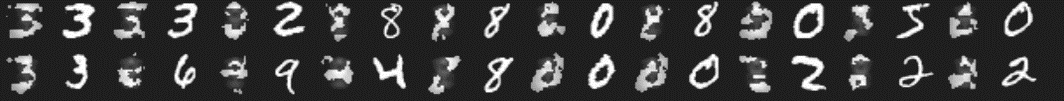}}
  }
\end{figure}

\begin{figure}
\label{fig: histograms}
\floatconts
  {fig:dssim kkt}
  {\caption{The empirical distribution of reconstructions across orbitopes using KKT-based method on MNIST-trained invariant networks. Orbitopes are discretized into bins, each representing a convex combination of orbit elements. Reconstructions are assigned to bins based on their nearest neighbor in the discretized orbitope.}}
  {%
    \subfigure[]{\label{fig:image-a}%
      \includegraphics[width=0.299\linewidth]{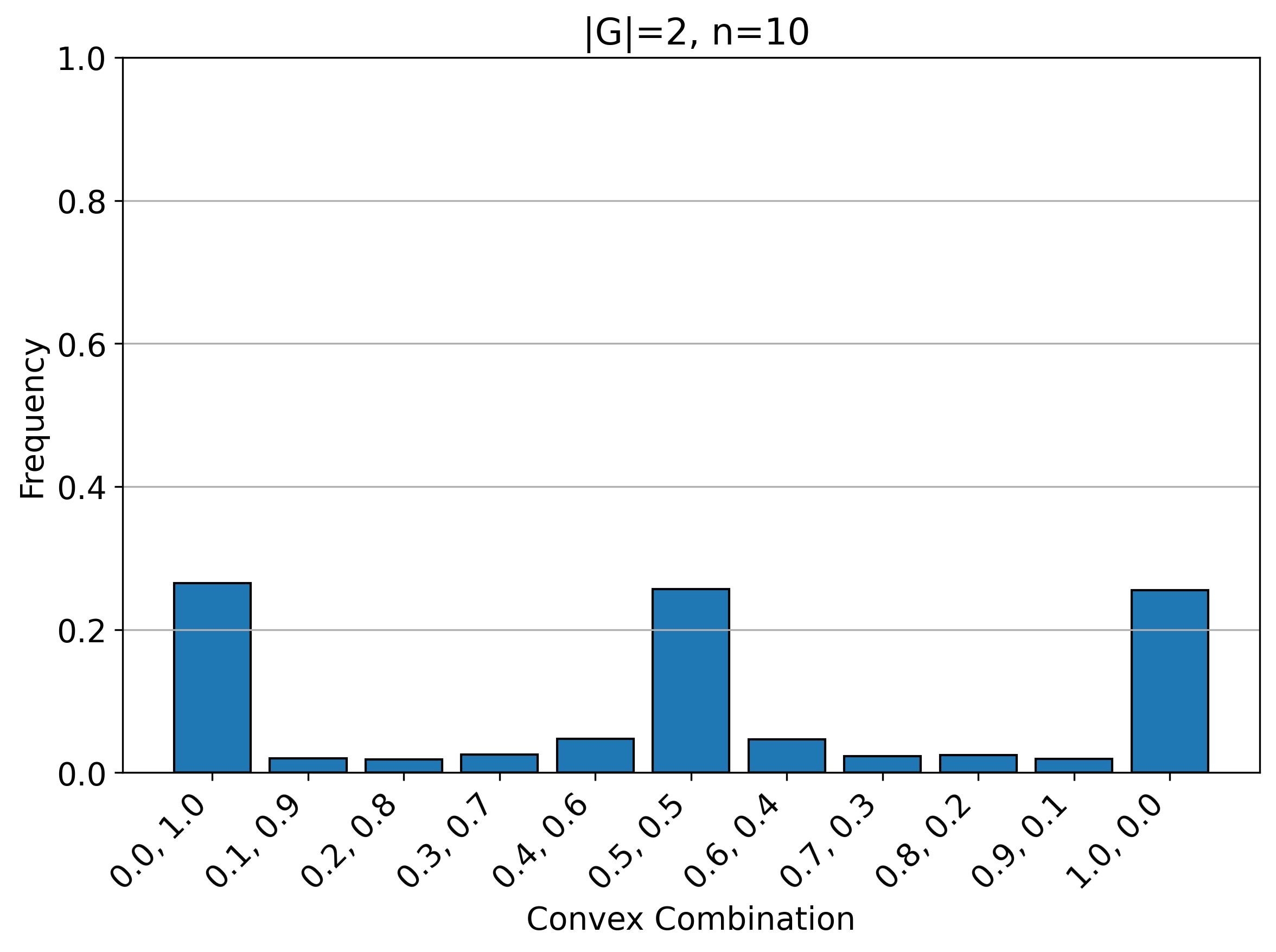}}%
    \qquad
    \subfigure[]{\label{fig:image-b}%
      \includegraphics[width=0.299\linewidth]{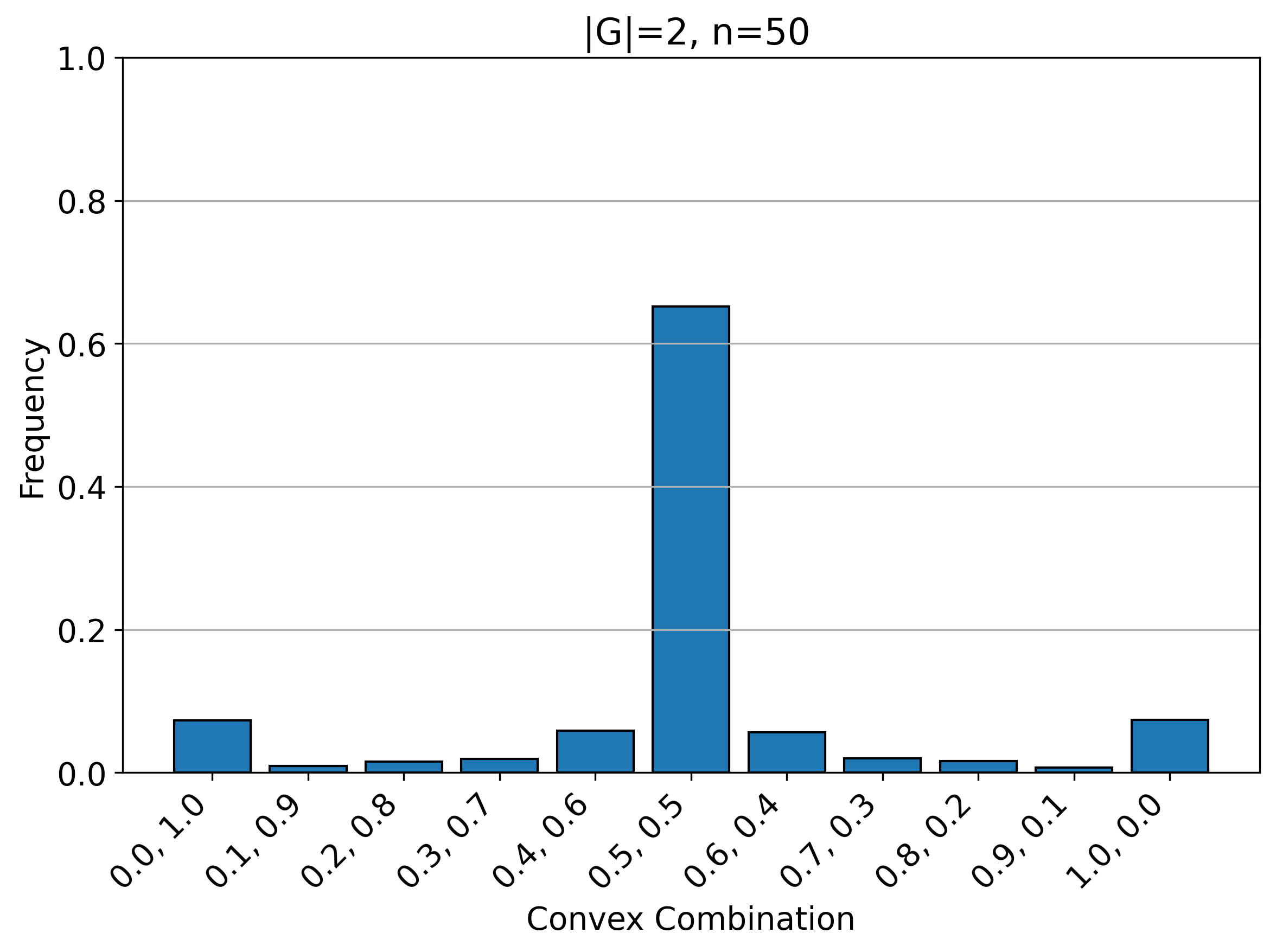}}%
    \qquad
    \subfigure[]{\label{fig:image-c}%
      \includegraphics[width=0.299\linewidth]{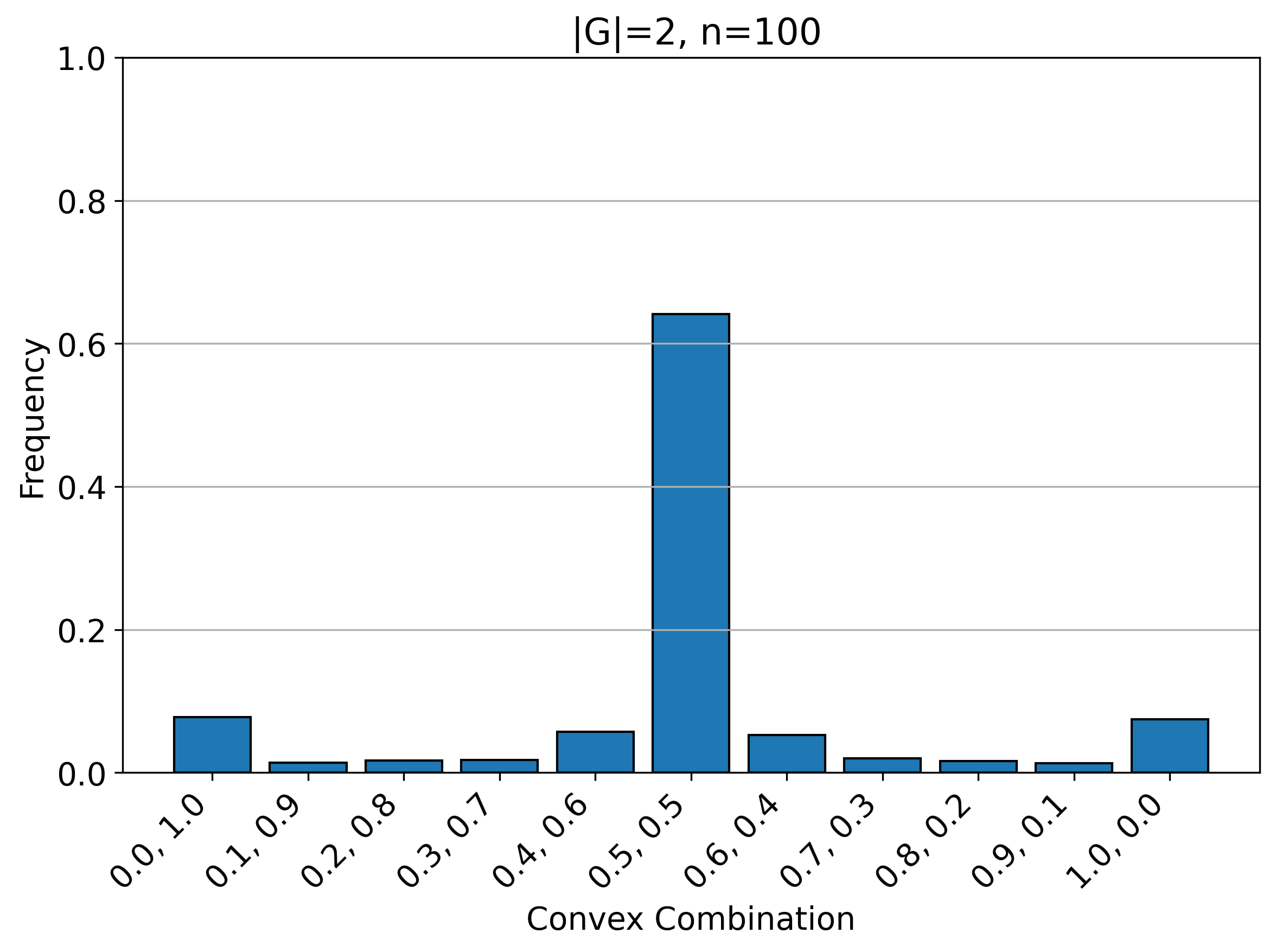}}%
    \\
        \subfigure[]{\label{fig:image-d}%
      \includegraphics[width=0.299\linewidth]{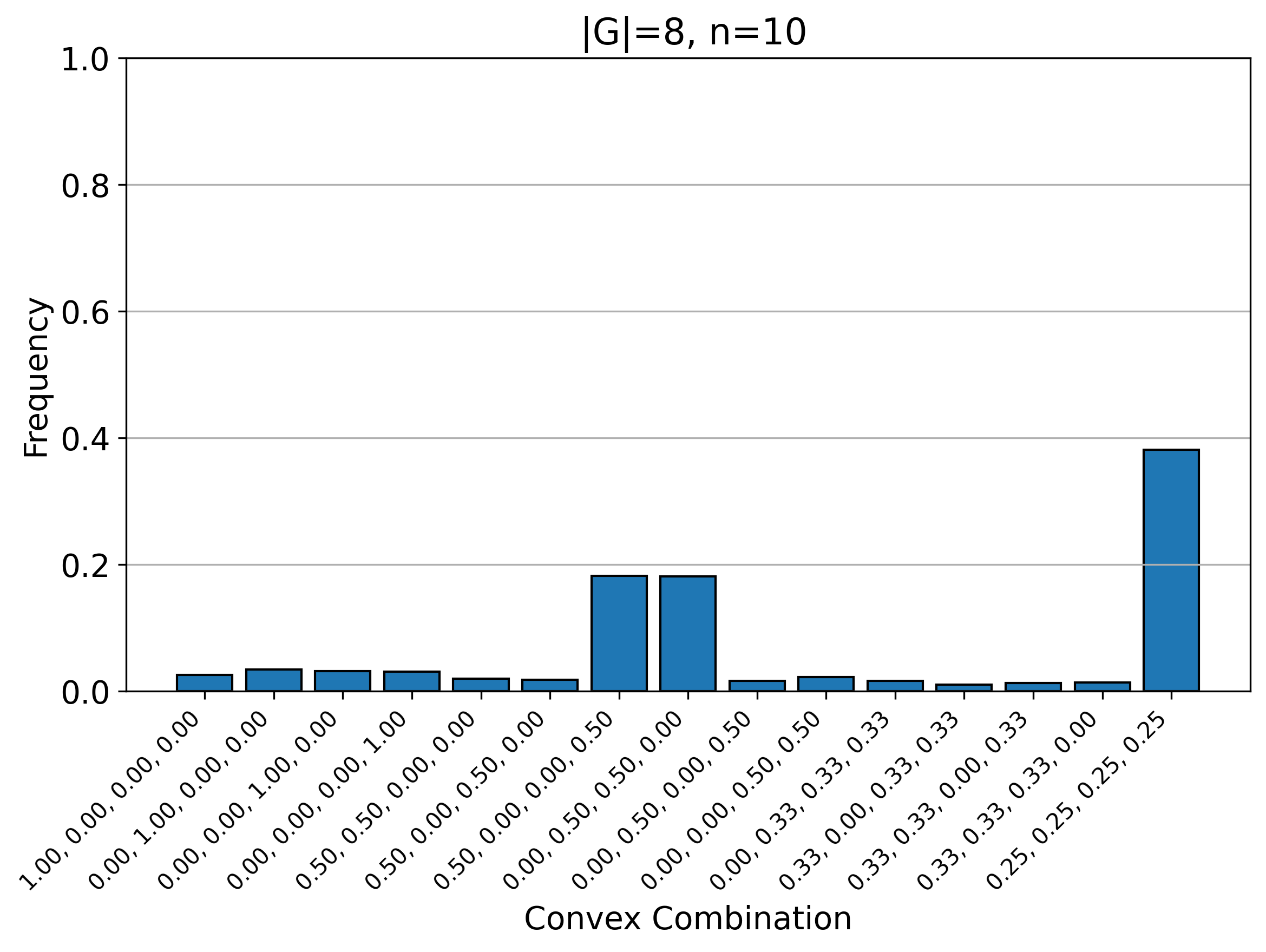}}%
    \qquad
    \subfigure[]{\label{fig:image-e}%
      \includegraphics[width=0.299\linewidth]{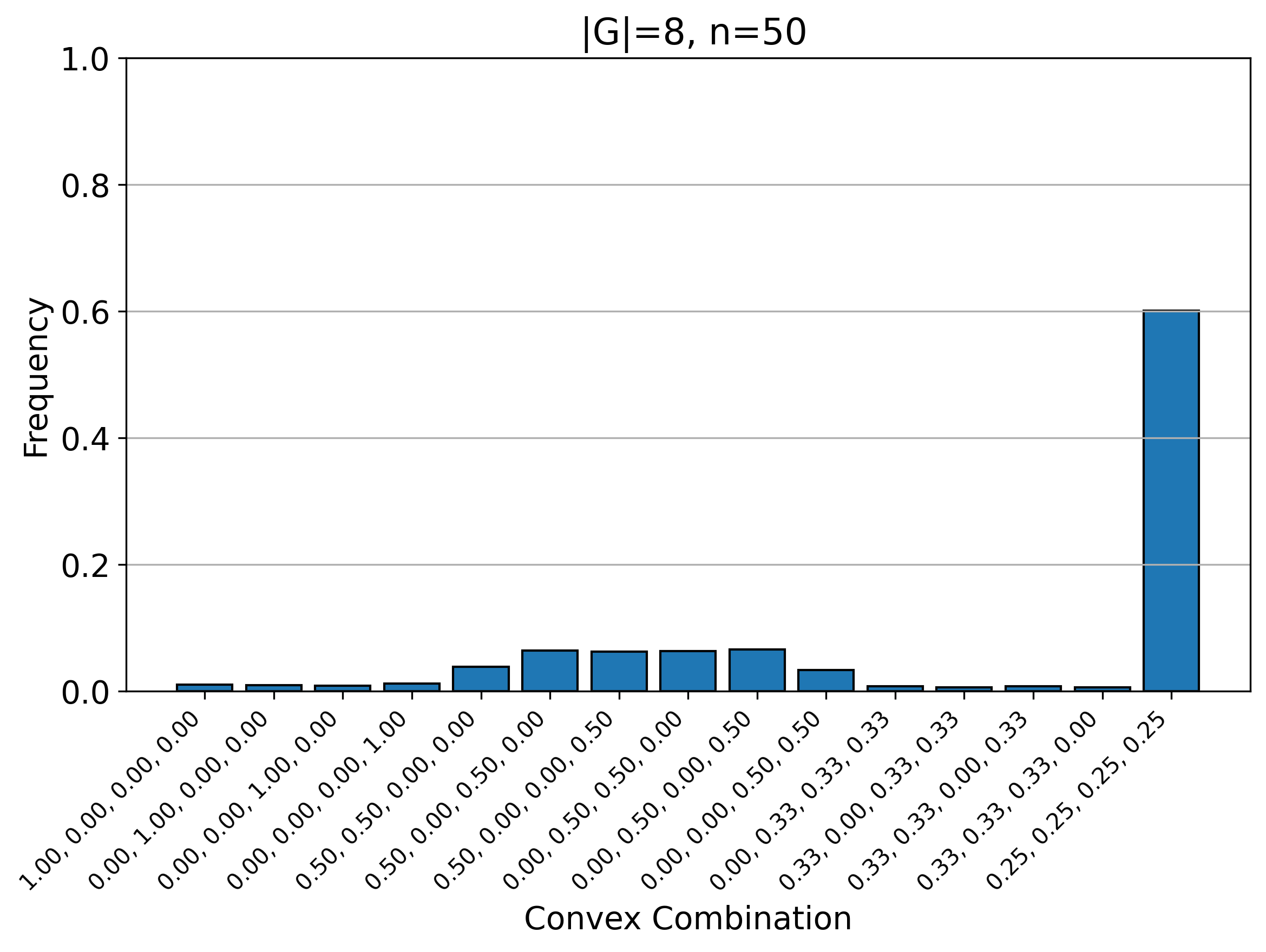}}%
    \qquad
    \subfigure[]{\label{fig:image-f}%
      \includegraphics[width=0.299\linewidth]{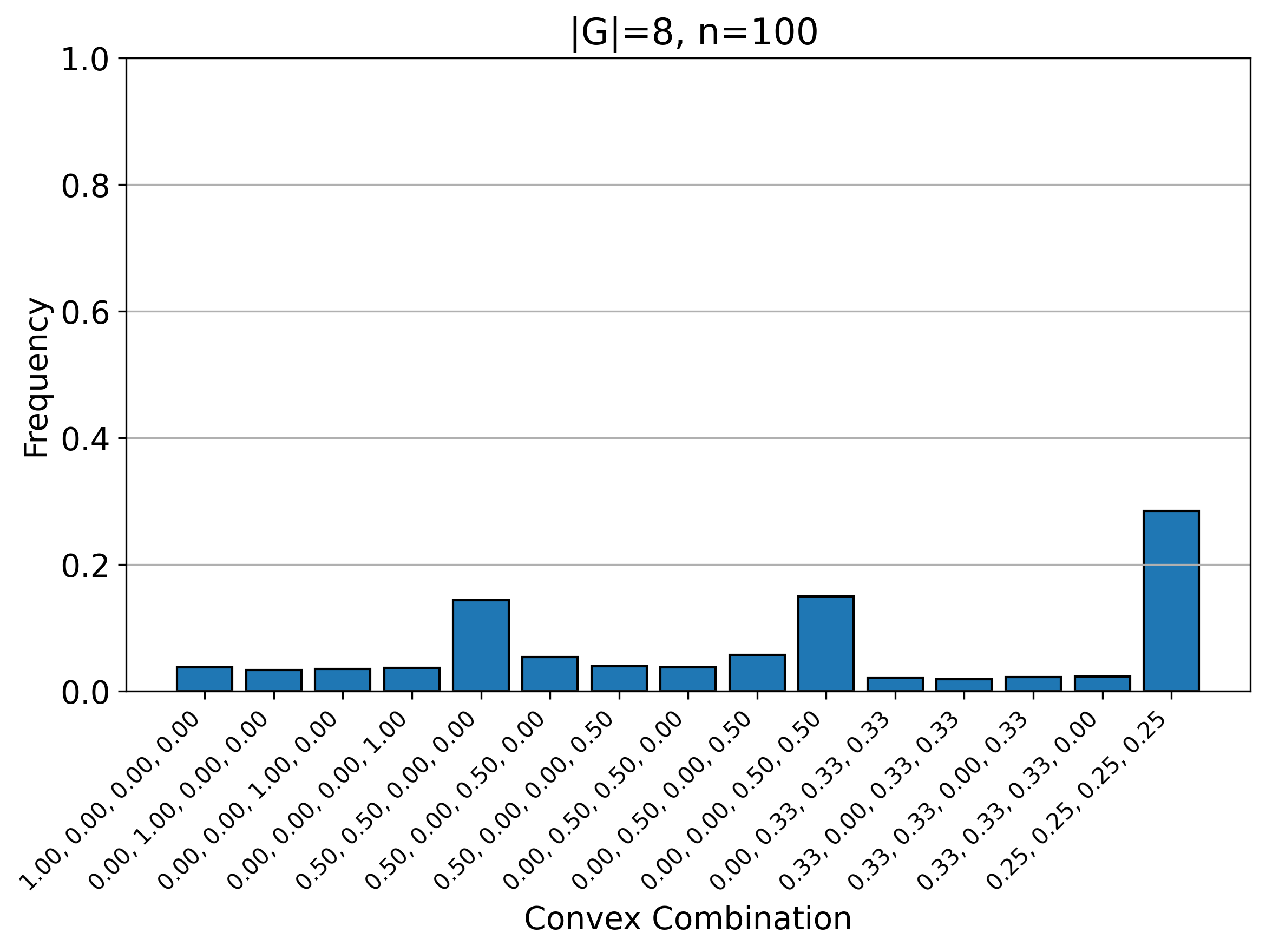}}%
    
  }
\end{figure}

\section{SAME-GD}
\begin{algorithm2e}[H]
\caption{SAME-GD}
\label{alg:sam}
\KwIn{$\{\eta_t , \alpha_t , \beta_t \}_{t=1}^T , T_{save} , T_{update}$}
\KwOut{$x_T$}
Draw $x_0$ \; 
$x_{prev}\leftarrow x_0$\;
\For{$i\leftarrow 1$ \KwTo $T$}{
\If{ $t \% T_{update} \neq 0 $ }{
$x_{t} = x_{t-1} - \eta_t \nabla \mathcal{L}(x_{t-1} ) $\;
}
\Else{  $x_t \leftarrow \alpha_t x_t + (1-\alpha_t) (x_{prev} - \bar{g} x_{prev}) $ \;
}
\If{$t \% T_{save} == 0 $ }{$x_{prev} \leftarrow  \beta_t x_t + (1-\beta_t) \nabla \mathcal{L}(x_{t})^2 $  \;} 
}

\end{algorithm2e}

\section{Discussion and future directions}\label{appendix:future directions}

In this section we discuss in details some challenges and future directions that arise from this work:

\begin{itemize}
    \item Our work focuses on relatively small groups, containing at most $8$ elements, which contain only rotation and reflection transformations. It would be interesting to further study reconstruction from models that are invariant to much larger groups.
    \item Our results indicate that reconstructions from invariant models lie close to the orbitope, mostly to the average over group elements. This finding only scratches the surface regarding how the reconstructions are distributed inside the orbitope, which may be effected by different factors such as initialization, architecture of the network, and structure of the group.
    \item We propose several methods to guide the reconstructions towards specific elements in the group, and thus to reconstruct the actual training samples (up to group action). However, it is not clear how well these methods generalize to larger datasets or larger groups. It will also be interesting to find new methods for this task, for example methods that take advantage of the geometrical properties of the orbitope, and may push the reconstructions towards extreme points of the orbitope.
    \item Our work focuses on image datasets, namely MNIST and CIFAR. It would be interesting to use these methods to reconstruct training samples from different data modalities, such as graphs or point clouds.
    \item In this work the models are constructed to be invariant using symmetrization of the feed-forward model. There are other methods to construct invariant networks, such as parameter sharing-based techniques, and it would be interesting to study data reconstruction attacks on such networks.
\end{itemize}

\end{document}